\journal{Journal of \LaTeX\ Templates}
\DeclareMathOperator*{\argmax}{arg\,max}
\DeclareMathOperator*{\argmin}{arg\,min}
\newtheorem{prop}{Proposition}
\theoremstyle{thmstyleone}
\theoremstyle{thmstyletwo}
\theoremstyle{thmstylethree}
\begin{document}

\begin{frontmatter}

\title{Interpreting Vulnerabilities of Multi-Instance Learning to Adversarial Perturbations}


\author[1]{Yu-Xuan Zhang}
\ead{inki.yinji@gmail.com}

\author[1]{Hua Meng}
\ead{menghua@swjtu.edu.cn}

\author[2]{Xue-Mei Cao}
\ead{caoxuemei.qpz@gmail.com}

\author[1]{Zhengchun Zhou\corref{mycorrespondingauthor}}
\ead{zzc@swjtu.edu.cn}
\cortext[mycorrespondingauthor]{Corresponding author}

\author[3]{Mei Yang}
\ead{yangmei@swpu.edu.cn}

\author[1]{Avik Ranjan Adhikary}
\ead{a.adhikary@swjtu.edu.cn}

\address[1]{School of Information Science and Technology, Southwest Jiaotong University, Chengdu 611730, China}
\address[2]{School of Computing and Artifical Intelligence, Southwestern University of Finance and Economics, Chengdu, 611130, China}
\address[3]{School of Computer Science, Southwest Petroleum University, Chengdu 610500, China}

\begin{abstract}
Multi-instance Learning (MIL) is a recent machine learning paradigm which is immensely useful in various real-life applications, like image analysis, video anomaly detection, text classification, etc.
It is well known that most of the existing machine learning classifiers are highly vulnerable to adversarial perturbations.
Since MIL is a weakly supervised learning, where information is available for a set of instances, called bag and not for every instances, adversarial perturbations can be fatal.
In this paper, we have proposed two adversarial perturbation methods to analyze the effect of adversarial perturbations to interpret the vulnerability of MIL methods.
Out of the two algorithms, one can be customized for every bag, and the other is a universal one, which can affect all bags in a given data set and thus has some generalizability.
Through simulations, we have also shown the effectiveness of the proposed algorithms to fool the state-of-the-art (SOTA) MIL methods.
Finally, we have discussed through experiments, about taking care of these kind of adversarial perturbations through a simple strategy.
\emph{Source codes are available at \href{https://github.com/InkiInki/MI-UAP}{https://github.com/InkiInki/MI-UAP}}.
\end{abstract}

\begin{keyword}
Customized perturbation \sep Multi-instance learning \sep Universal perturbation \sep Vulnerability
\end{keyword}

\end{frontmatter}


\section{Introduction}\label{sec: introduction}

The data structure of multi-instance learning (MIL) makes it unique by describing each sample as a bag of many instances.
In MIL, the number of instances is typically higher than that of bags, while the supervised information is only provided at the bag-level.
According to the basic MIL assumption \cite{Dietterich:1997:3171}, a bag is positive if it contains at least one positive instance; otherwise it is negative.
Such weakly supervised scenarios are challenging since they only offer a very small amount of information to support decision-making.
But, this makes MIL more attractive to the researchers, which enables them to work at a higher level (bag-level) and address many real-world issues, which includes \cite{Lin:2022:interventional,Liu:2018:77277735,Qin:2022:multi,Yang:2022:109121,Zeng:2022:110}, drug production \cite{Dietterich:1997:3171,Kwok:2007:901906,Wu:2018:10651080}, and video abnormal detection \cite{Li:2022:self,Nguyen:2018:67526761,Sultani:2018:64796488}.

MIL models can be broadly classified into two categories \cite{Li:2021:1431814328}:
embedding- and instance-based methods.
The basic difference between these two approaches is that, the former obtains the label of the bag, end-to-end, without the need to provide or learn an instance-level classifier.
As a result, this type of method has gained popularity among researchers in recent times, which gives rise to two main ideas: neural network \cite{Campanella:2019:13011309,Ilse:2018:21272136,Konstantinov:2022:123,Wang:2018:1524} and traditional embedding methods \cite{Wei:2017:975987,Yang:2021:112}.
The main task of these methods is to learn the bag embedding through a permutation-invariant transformation, so the quality of the embedding directly affects the classification performance.
However, these methods hardly take algorithm security into account, which makes the algorithm seriously vulnerable to adversarial perturbations, which can fool image search engines or surveillance cameras.

\begin{figure}[!htb]
\centering
\includegraphics[width=\linewidth]{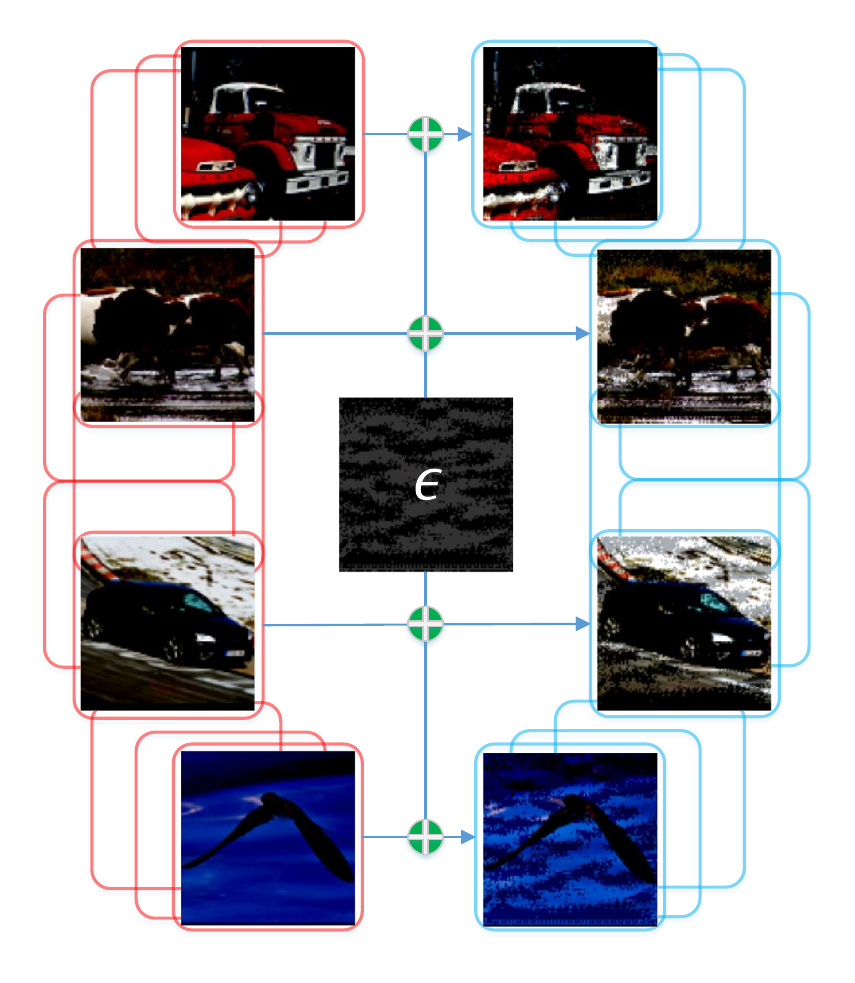}
\caption{
A demonstration of the proposed MI-UAP applied to ABMIL \cite{Ilse:2018:21272136} on the STL10 data set \cite{Coates:2011:215223}.
The bags, universal perturbation $\epsilon$, and perturbed bags are depicted in the left, central, and right images, respectively.
The objective is to determine whether a bag contains instances (images) labeled as trucks.
The MIL learner may, however, turn a blind eye to the fact that the bag contains at least one truck when the perturbation is added.
}
\label{fig: example}
\end{figure}

To interpret the vulnerability of MIL algorithm, in this paper, we propose multi-instance customized and universal adversarial perturbation (MI-CAP and MI-UAP), a demonstration is shown in Figure \ref{fig: example}.
For a given bag containing instances, we have proposed two ways to generate perturbations.
First, MI-CAP is used to fool the prediction of the MIL learner of the current bag by iterating the gradient-related computation.
The perturbation that is created for each bag may be unique, which is also the meaning of ``customized''.
In the next method, MI-UAP obtains a universal perturbation that is easy to store and can be easily generalized by repeatedly fine-tuning the perturbation generated by MI-CAP.
While such a process will result in some performance losses, since it needs to store one perturbation, it is very advantageous when dealing with data sets.
To verify the effectiveness of these methods, we have conducted experiments on three well-known image databases, five classic MIL data sets, and two popular video anomaly detection data sets.
The fooled methods are all state-of-the-art (SOTA) MIL methods, including five neural networks and five traditional methods.
The experimental results expose the vulnerability of these methods, which can be fatal in some circumstances, especially when the bag actually does include instances of our demand.
Hence, we also propose a simple experiment to illustrate the prevention of the proposed adversarial perturbations, and the results show that it can somewhat offset the algorithm's vulnerability.

In summary, our main contributions are as follows:

\begin{itemize}
  \item
  We design two strategies to generate perturbations to expose the vulnerability of SOTA MIL methods, one of which is bag-customized and the other indirectly demonstrates the existence of universal bag-agnostic perturbations for these MIL methods.
  \item
  The proposed methods and the generated adversarial bags can be considered useful knowledge or prior experience for future learners.
  The reason is that MI-CAP can generate perturbations that are unaffected by other bags, while the perturbations generated by MI-UAP can be easily stored as part of the knowledge base.
  \item
  We demonstrate the generalizability of these perturbations in experiments, showing that perturbations generated by one neural network can fool other networks as well.
  In addition, we propose a simple strategy to mitigate the effect of these perturbations as much as possible.
\end{itemize}

\section{Related Work}

In this section, we revisit some works related to MIL methods and adversarial perturbation approaches.

\subsection{MIL Methods}

In his study in 1997 about drug activity prediction \cite{Dietterich:1997:3171}, Dietterich \textit{et al.} proposed MIL.
MIL is currently used in many real-world applications, such as image classification and video anomaly detection due to its unique features, and it is can be broadly classified in two branches.
For traditional methods, MILES \cite{Chen:2006:19311947} builds a permutation-invariant transformation that embeds the bag into a new feature space for instance-level classifiers to function.
BAMIC and ELDB \cite{Zhang:2009:4768,Yang:2021:112} use the set-kernel as the foundation, and each feature of the bag embedding vector denotes how similar it is to the corresponding key bag.
miVLAD and miFV \cite{Wei:2017:975987} are very similar to BAMIC and ELDB, with the primary distinction being that the instance-set distance metric takes the place of the set-kernel.
Additional methods of this type include BDR \cite{Huang:2022:108583}, miVLAD \cite{Wei:2017:975987}, ISK \cite{Xu:2019:941949}, AEMI \cite{Yang:2022:109121}, MSK \cite{Yang:2022:339351}.

At the same time, MIL neural networks also advanced quickly, particularly the more recent attention-based ones.
ABMIL \cite{Ilse:2018:21272136} presents two attention strategies by expressing the MIL problem as learning the Bernoulli distribution of the bag label, where the bag label probability is completely parameterized by neural networks.
LAMIL \cite{Shi:2020:57425745} proposes a new loss function based on the attention mechanism, which completes the prediction of the bag label while calculating the weight and uses the consistency cost to boost the generalization performance of the model.
DSMIL \cite{Li:2021:1431814328} introduces a novel MIL aggregator to model the relationships between the instances in the bag and employs self-supervised contrastive learning to extract the good representation of bags.
It also adopts a pyramidal mechanism to further enhance classification performance.
MAMIL \cite{Konstantinov:2022:123} takes into account nearby instances of each analysis instance to efficiently process different types of instances and implement a diverse feature representation of the bag.
Some other methods include MILRNN \cite{Campanella:2019:13011309}, C2C \cite{Sharma:2021:682698}, BSN \cite{Wang:2019:578588}, DP-MINN \cite{Yan:2018:662677} and BP-MIP \cite{Zhang:2004:110}.

\subsection{Adversarial Perturbation Approaches}

Numerous methods have demonstrated the vulnerability of traditional machine learning or deep learning algorithms.
DeepFool \cite{Moosavi:2016:25742582} shows how deep networks can be fooled with only minor, well-sought perturbations to images.
UAP \cite{Moosavi:2017:17651773} proves the existence of an image-agnostic and tiny perturbation vector that causes natural images to be misclassified frequently.
\cite{Xu:2020:151178} presents a methodical and comprehensive overview of the main threats of attacks for three most popular data types, including images, graphs and text.
Homotopy-Attack \cite{Zhu:2021:1286812877} fools the neural networks by jointly addressing the sparsity and the perturbation bound in one unified framework.
UAP-GAN \cite{Wang:2022:122} respectively studies the robustness of image classification and captioning systems based on convolutional neural networks and recurrent neural networks.
LFT \cite{Zhao:2022:17} leverages bilevel optimization and learning-to-optimize techniques for perturbation generation with an improved attack success rate.

In summary, MIL is a learning form with a special data structure that has derived various learning strategies and achieved good results in several real-life applications, like image classification, video anomaly detection, and others. However, since MIL is a special kind of machine learning, which is vulnerable to adversarial perturbations, the security and vulnerabalities of MIL algorithms need to be discussed.

\section{Methodology}

In this section, we will firstly present the problem statement and then propose two perturbation strategies for MIL, namely the customized and universal adversarial perturbations.

\subsection{Problem Statement}

In the MIL scenario, each bag $B_i = \{ \mathbf{x}_{i1}, \mathbf{x}_{i2}, \dots, \mathbf{x}_{in_i} \}$ corresponds to a sample, where $\mathbf{x}_{ij} \in \mathbb{R}^d, j = 1, 2, \dots, n_i$ is the $j$-th instance of $B_i$, $n_i$ is the cardinality of $B_i$, and $d$ is the dimension.
Based on the basic MIL assumption \cite{Dietterich:1997:3171}, the supervised information $y_i \in Y = \{ 1, 0 \}$ of the label of $B_i$ is only given at the bag-level, while instance labels are either unknown or unavailable.
In addition, a bag is positive ($y_i = 1$) if it contains at least one positive instance;
otherwise negative ($y_i = 0$).
Each instance $\mathbf{x}_{ij}$ can have an assumed label $y_{ij} \in Y$ from which the bag's label can be deduced:
\begin{equation}\label{eq: mi_assumption}
y_i = \left\{
\begin{array}{lll}
1, & \forall j \in 1, 2, \dots, n_i, \exists y_{ij} = 1;\\
0, & \text{otherwise}.
\end{array}
\right.
\end{equation}

One of the most popular MIL research topics is neural network methods.
Take as an example these methods \cite{Ilse:2018:21272136,Konstantinov:2022:123,Shi:2020:57425745}, which aim to learn an end-to-end mapping $\mathcal{F}: B_i \to \hat{y}_i = \sigma(\mathbf{b}_i) = \argmax_c p_{ic}$ to embed each bag into a new feature space while keeping the highest degree of distinguishability to maximize the classification performance, where $p_{ic} \in \mathbf{p}_i = f (\mathbf{b}_i)$ is the probability that the bag is from class $c$ and $f (\cdot)$ a fully connected layer as well as $\mathbf{b}_i \in \mathbb{R}^{d'}$ is the embedding vector of $B_i$.

However, they are easily fooled by a slight perturbation $\epsilon$ into excluding a bag from the decision region it belongs to.
This is exactly what we intend to do, i.e., expose the MIL algorithms' vulnerability by maximizing the fool rate:
\begin{equation}\label{eq: goal}
\max_{B_i \in \mathcal{S}} \frac{\sum_i \mathbb{I} (\tilde{y}_i \neq \hat{y}_i)}{\sum_i \mathbb{I} (y_i == \hat{y}_i)},
\end{equation}
where $\mathcal{S} = \{B_i\}_{i = 1}^N$ is the data set with the size $N$, $\epsilon$ is a vector of the same dimension as $\mathbf{x}_{ij}$, and $\mathbb{I}(\cdot)$ is an indicator function that outputs $1$ if the input is true, otherwise $\mathbb{I}(\cdot)=0$.
Here $\tilde{y}_i = \mathcal{F} (B_i \oplus \epsilon)$, and $B_i \oplus \epsilon$ refers to adding interference to each instance in the bag, which is equivalent to $\{\mathbf{x}_{ij} + \epsilon\}_{j = 1}^{n_i}$.
This has two advantages:
1) ``Usually unequal number of instances in different bags'' is not a restriction; and
2) The number of positive instances is usually much smaller than the number of negative ones, while adding perturbations to positive ones takes extra time.
This problem is avoided by Eq. \eqref{eq: goal}.
Next, we will focus on two perturbation generation methods.

\subsection{Customized Adversarial Perturbation (CAP)}

\begin{algorithm}[!htb]
\caption{MI-CAP}\label{ag: cap}
 \KwData{
 $\ $\\
 $\quad$Bag $B_i$;\\
 $\quad$MIL neural network $\mathcal{F}$;\\
 $\quad$Number of iterations $L_1$;\\
 $\quad$Gradient computation mode: ``ave'' or ``att'';}
 \KwResult{
 $\ $\\
 $\quad$Customized perturbation $\epsilon_i$;
 }
 Initialize the perturbation $\epsilon_i^l = \mathbf{0}$\ and the gradient vector $\mathbf{g}_i^l = \mathbf{0}$, where $l = 0$\;
 Get the prediction $\tilde{y}_i = \mathcal{F} (B_i + \epsilon_i^0)$\;
 \While{$\mathcal{F} (B_i \oplus \epsilon_i^l) == \mathcal{F}(B_i)$ and $l < L_1$ }{
  $l$++\;
  Compute the gradient $\hat{G}_{i} = \frac{\partial p_{i\tilde{y}_i}}{\partial (B_i \oplus \epsilon_i^{l - 1})}$\;
  Compute the gradient $\tilde{G}_i = \frac{\partial p_{i\tau_i}}{\partial (B_i \oplus \epsilon_i^{l - 1})}$, where $\tau_i =$ Eq. \eqref{eq: min_perturbation}\;
  \eIf{``ave'' is used}{
    $\mathbf{g}_i^{l-1} = \frac{1}{n_i} \sum_{j = 1}^{n_i} \hat{\mathbf{g}}_j$, where $\hat{\mathbf{g}}_j$ is the $j$-th row of $\hat{G}_i$\;
    $\mathbf{g}_i^l = \frac{1}{n_i} \sum_{j = 1}^{n_i} \tilde{\mathbf{g}}_j$, where $\tilde{\mathbf{g}}_j$ is the $j$-th row of $\tilde{G}_i$\;
  }
  {
    $\mathbf{g}_i^{l-1} = \hat{\mathbf{g}}_{\iota_i}$, where $\iota_i =$ Eq. \eqref{eq: ins_idx}\;
    $\mathbf{g}_i^{l} = \tilde{\mathbf{g}}_{\iota_i}$\;
  }
  $\Delta \epsilon_i = \mathbf{g}_i^l - \mathbf{g}_i^{l - 1}$\;
  $\Delta \epsilon_i \leftarrow |p_{i\tau_i} - p_{i\hat{y}_i}|\frac{\Delta \epsilon_i}{\| \Delta \epsilon_i + \eta\|_2^2}$, where $\eta > 0$ is a small number used to avoid division by $0$\;
  $\epsilon_i^l = \epsilon_i^{l - 1} + \Delta \epsilon_i$\;
  Update $\tilde{y}_i$ via $\mathcal{F} (B_i \oplus \epsilon_i^l)$\;
  }
  \Return $\epsilon_i^l$\;
\end{algorithm}

It is not uncommon to perturb a specific sample \cite{Goodfellow:2014:111,Moosavi:2016:25742582,Xu:2020:151178}, and there is a clear advantage to doing so:
Ideally, a particular perturbation can always be generated to lead the classifier to misclassify the current sample.
Like their basic idea,
the goal of multi-instance learning customized adversarial perturbation (MI-CAP) is to fool the MIL learner by generating a unique perturbation $\epsilon_i$ for $B_i$ over a number of iterations $L$.
Such a process is called ``customized'' because the same perturbations may not exist at all for a given data set.
Specifically, the minimal perturbation that fools the classifier to mispredict the label of $B_i$ can be written as
\begin{equation}\label{eq: min_perturbation}
\argmin_{\epsilon_i} \| \epsilon_i \|_2\quad\text{s.t.}\quad p_{i\tau_i}^{cus} \geq \hat{y}_i, \tau_i = Y \setminus \{ \hat{y}_i \}
\end{equation}
where $p_{ic}^{cus} \in l(\mathbf{b}_i^{cus})$ and $\mathbf{b}_i^{cus}$ is the embedding vector of the perturbed bag $B_i \oplus \epsilon_i$.
Algorithm \ref{ag: cap} gives the solution to this equation based on the core idea of DeepFool \cite{Moosavi:2016:25742582}, which can handle non-convex optimization problems brought by neural networks as the classifier.

\begin{figure*}[!htb]
    \centering
    \subfigure[MI-CAP]{\includegraphics[width=0.44\hsize]{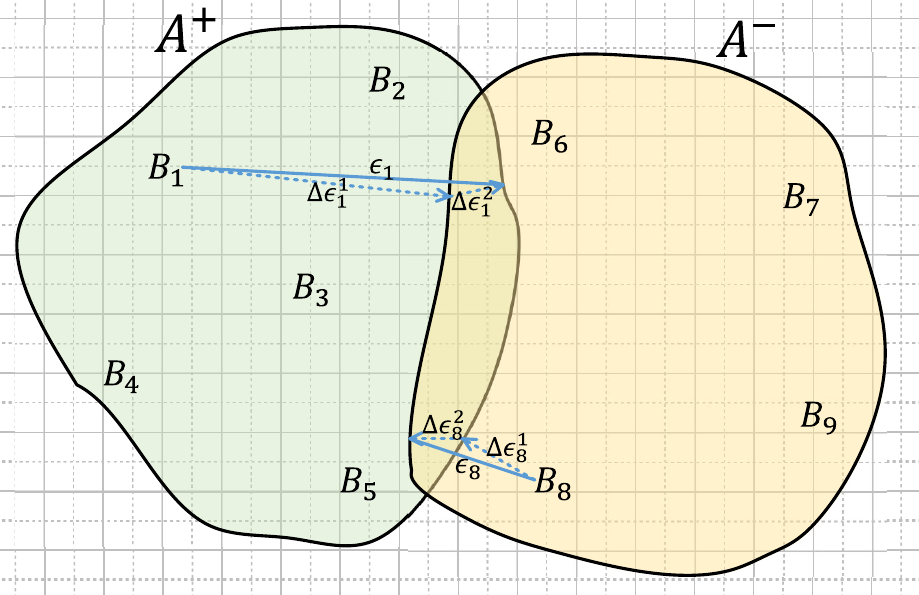}\label{fig: cap}}
    \hspace{0.05\hsize}
    \subfigure[MI-UAP]{\includegraphics[width=0.44\hsize]{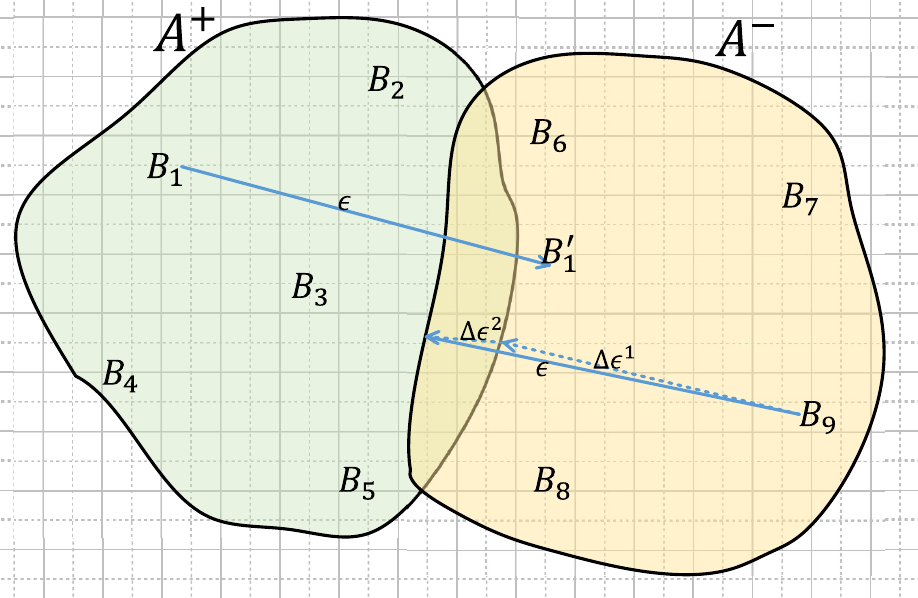}\label{fig: uap}}

    \caption{
    Schematic representations of the MI-CAP and MI-UAP.
    Different classification regions are represented by the light green and light orange irregular polygons, where the positive region $A^+$ has five example bags $\{ B_1, B_2, \dots, B_5 \}$ and the negative region $A^-$ contains four bags $\{ B_6, B_7, \dots, B_9 \}$:
    (a) The goal of MI-CAP is to generate a minimal perturbation $\epsilon_i$ for each bag $B_i$ in order to remove it from its original classification region.
    (b) The goal of MI-UAP is similar to that of MI-CAP, except that it only generates a universal perturbation for all bags.
    }
    \label{fig: cap_uap}
\end{figure*}

\begin{figure}[!htb]
\centering
\includegraphics[width=0.85\linewidth]{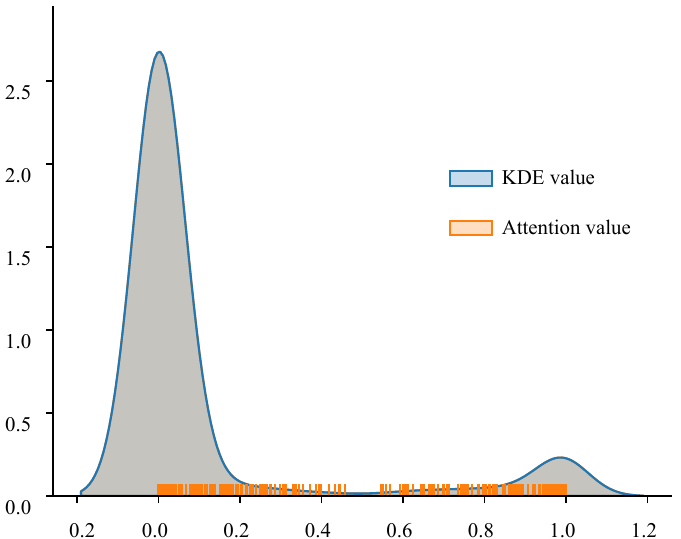}
\caption{
The distribution and KDE results of the attention values.
}
\label{fig: kde}
\end{figure}

Lines 1--2 are the initialization of MI-CAP.
This stage initializes the perturbation and the gradient vector, and get the predicted bag label $\tilde{y}_i$ based on $B_i$ and $\epsilon_i^0$.

Lines 3--18 are the generation of the customized perturbation.
Its critical steps are the computation and updating of the gradient, together with fine-tuning the $\epsilon_i$ by assessing the probability $p_{ic}$ and $\Delta\epsilon_i$.
This tries to ensure that MI-CAP generates a perturbation within a finite number of iterations $L_1$.
The generated perturbation is small enough not to affect human judgment while fooling the MIL learner by making the bag deviate from its original classification region.
A schematic representation of the proposed MI-CAP is shown in Figure \ref{fig: cap}.

It should be emphasized that the perturbation generation strategy of DeepFool is based on identical-resolution images, which is unapplicable in MIL because the number of instances in each bag is not necessarily equal.
But, letting $n_i = n_j, i, j \in [1..N], i \neq j$ in real-world applications is impractical since MIL's flexibility would be lost.
That is, we need some additional operations to unify all gradients into the same shape since the ones corresponding to different bags usually have different shapes.

Therefore, we offer two solutions to the above problem:
The first averages each row of $\hat{G}_i$ and $\tilde{G}_i$, as in Lines 8--9, and the second skillfully makes use of the attention-based MIL methods \cite{Ilse:2018:21272136,Konstantinov:2022:123}, as in Lines 11--12.
Specifically, such methods can return the attention weight $A_i = \{\alpha_{i1}, \alpha_{i2}, \dots, \alpha_{in_i}\}$ related to $B_i$ while obtaining $\tilde{y}_i$, where $\alpha_{ij}$ is the attention value of the instance $\mathbf{x}_{ij}$.
$A_i$ reflects how much the instances in the bag contribute to the prediction.
Taking the example of having only one positive instance in the bag, in an ideal case, one of the values in $A_i$ might be much larger than the other values in $A_i$.
To test this conjecture, the attention values of ABMIL on the MNIST data set with 50 training epochs are gathered, i.e., $\mathcal{A} = \bigcup_{i=1}^{N} A_i$.
The specific parameter settings will be explained in Section \ref{sec: experimental_setups}.
The distribution and the kernel density estimation (KDE) results of $\mathcal{A}$ are shown in Figure \ref{fig: kde}.
The results show that the vast majority of instances have attention values that are close to zero, suggesting the gradients associated with them may not be as significant.

As a result, $A_i$ can be utilized to assist in identifying the instance in which the current bag, when left undisturbed, contributes the most to the bag label.
The index of such an instance is
\begin{equation}\label{eq: ins_idx}
\iota_i = \argmax_j \alpha_{ij},
\end{equation}
and only the gradient of the instance indicated by $\iota_i$ will be used when updating $\epsilon_i$.

\begin{algorithm}[!htb]
\caption{MI-UAP}\label{ag: uap}
 \KwData{
 $\ $\\
 $\quad$Bag $B_i$;\\
 $\quad$MIL neural network $\mathcal{F}$;\\
 $\quad$Number of iterations $L_1$ and $L_2$;\\
 $\quad$Gradient computation mode: ``ave'' or ``att'';\\
 $\quad$Fooling rate threshold $\delta$;\\
 $\quad$Magnitude control parameter $\xi$;}
 \KwResult{
 $\ $\\
 $\quad$Universal perturbation $\epsilon$;}
 Initialize the perturbation $\epsilon^l = \mathbf{0}$ and the fooling rate $r^l = -1$, where $l = 0$\;
 \While{$r^l < \delta$ and $l < L_2$ }{
  $l$++\;
    \For{$B_i \in \mathcal{S}$}{
        \If{$\tilde{y}_i == \hat{y}_i$}{
            $\epsilon_i^l =$ MI-CAP($B_i, \mathcal{F}, L_1$)\;
            $\Delta \epsilon^l =$ Eq. \eqref{eq: extra_goal} with $\epsilon_i^l$\;
            $\epsilon^l \leftarrow \epsilon^l + \Delta \epsilon^l$\;
            Project $\epsilon^l$ via Eq. \eqref{eq: project} with $\xi$\;
        }
    }
    $r^l =$ Eq. \eqref{eq: goal}\;
    \If{$r^l > r^{0}$}{
        $r^0 = r^l$\;
        $\epsilon^0 = \epsilon^l$\;
    }
 }
 \Return $\epsilon^l$\;
\end{algorithm}

\subsection{Universal Adversarial Perturbation (UAP)}

MI-UAP is an extension of MI-CAP that iteratively utilizes the perturbations produced by the latter to obtain a universal perturbation.
This necessarily sacrifices some performance, since customized perturbations always try to fool the MIL learner into making an incorrect judgment about each bag.
But the universal perturbation also implies flexibility, generalizability, and ease of storage, which may have greater applications in the future, such as intrinsic knowledge obtained from the data set.
\cite{Moosavi:2017:17651773,Wang:2022:122,Zhao:2022:17} and others also established the existence of such a perturbation based on traditional neural networks.

Therefore, we aim to find a universal perturbation $\epsilon$ that satisfies:
\begin{equation}\label{eq: goal_uap}
    \begin{aligned}
        \argmin_\epsilon \| \epsilon \|_2,\quad
        \text{s.t.}\quad
        \left\{
        \begin{array}{l}
            Eq. \eqref{eq: goal} \geq \delta;\\
            \| \epsilon \|_\infty \leq \xi,\\
        \end{array}
        \right.
    \end{aligned}
\end{equation}
where $\delta > 0$ is the fooling rate threshold parameter and $\xi > 0$ is the magnitude control parameter of $\epsilon$.
Similar to MI-CAP, a schematic representation of MI-UAP is shown in Figure \ref{fig: uap}, and its solution is shown in Algorithm \ref{ag: uap}.

Line 1 starts MI-UAP by setting the universal perturbation $\epsilon^0 = \mathbf{0}$ and the fooling rate $r^0 = 0$.

Lines 2--17 correspond to the entire optimization process, which ends when the predetermined threshold $\delta$ or the number of iterations $L_2$ is reached.
In the algorithm's main loop, we will iterate over each bag $B_i$ in the data set $\mathcal{S}$, as finding the smallest perturbation $\epsilon_i^l$ for each bag is one of the critical phases.
This is insufficient, though, as we require a perturbation that is as effective for all bags as possible.
As a result, using $\epsilon_i^l$ as the basis, we need to find an extra perturbation $\Delta \epsilon_i$ that satisfies:
\begin{equation}\label{eq: extra_goal}
\Delta \epsilon^l = \argmin_\varsigma \| \varsigma \|_2,\ \ \text{s.t.}\ \ \mathcal{F}(B_i \oplus ( \epsilon^l+\varsigma)) \neq \hat{y}_i.
\end{equation}
Once we have $\Delta \epsilon^l$, we will utilize it to update $\epsilon^l$ as stated in Line 8.
In addition, the updated perturbation might not satisfy the constraints in Equation \eqref{eq: goal_uap}, for which we need to project it into the new feature space:
\begin{equation}\label{eq: project}
\epsilon^l = \argmin_\varsigma \| \epsilon^l - \varsigma \|_2,\quad\text{s.t.}\quad\| \varsigma \|_2 \leq \xi.
\end{equation}
Note that Eq. \eqref{eq: project} is also used to project the perturbation generated by MI-CAP to adjust its magnitude.
At last, the perturbation $\epsilon$ with the highest fooling rate $r$ is returned.

\subsection{Discussion and Extension}

For a more in-depth investigation of MI-CAP and MI-UAP, here we will discuss their time complexities and extend them to more complex cases with the bag of images.

\begin{prop}\label{property: complexity}
The time complexity for MI-CAP and MI-UAP to generate perturbations for all bags is $O(d^2\Sigma)$ and $O(Nd^2\Sigma)$, respectively, where $d$ is the dimension, $\Sigma = \sum_i n_i$ is the total number of instances, and $N$ is the number of bags.
\end{prop}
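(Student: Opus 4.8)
The plan is to bound the running time of each algorithm by isolating its dominant elementary operation, treating the iteration caps $L_1$ and $L_2$ as constants, and then summing the per-bag cost over the data set. The operations inside the loops of Algorithm~\ref{ag: cap} and Algorithm~\ref{ag: uap} fall into two groups: (i) the forward evaluations $\mathcal{F}(B_i \oplus \epsilon)$ together with the gradient computations $\hat{G}_i$ and $\tilde{G}_i$, which push the $n_i$ instances of $B_i$ through the network; and (ii) the purely vectorial updates on the $d$-dimensional perturbation, such as $\Delta \epsilon_i = \mathbf{g}_i^l - \mathbf{g}_i^{l-1}$, its $\ell_2$-normalisation, and the accumulation $\epsilon_i^l = \epsilon_i^{l-1} + \Delta \epsilon_i$. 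The first step is to argue that group (ii) costs only $O(d)$ per step and is therefore dominated by group (i).

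First I would analyse MI-CAP. Under the standing assumption that the embedding dimension $d'$ is $\Theta(d)$ and that the costliest layer of $\mathcal{F}$ is a dense matrix multiplication, a single instance incurs $O(d^2)$ work in both the forward and the backward pass, so processing the whole bag $B_i$ costs $O(d^2 n_i)$; the averaging in Lines~8--9 adds only $O(d n_i)$ and the attention selection in Lines~11--12 only $O(n_i)$, both subdominant. Since the \texttt{while} loop executes at most $L_1 = O(1)$ times, the cost of MI-CAP on one bag is $O(d^2 n_i)$. Summing over the data set then yields
\[
\sum_{i=1}^{N} O(d^2 n_i) = O\!\left(d^2 \sum_{i} n_i\right) = O(d^2 \Sigma),
\]
which is the claimed bound for MI-CAP.

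For MI-UAP I would unfold the nested loop structure of Algorithm~\ref{ag: uap}. The outer \texttt{while} loop runs at most $L_2 = O(1)$ times, and the inner \texttt{for} loop iterates over all $N$ bags. Within one inner iteration the dominant contributions are the call to MI-CAP on $B_i$, the auxiliary perturbation of Eq.~\eqref{eq: extra_goal}, and the projection of Eq.~\eqref{eq: project}, the last being an $O(d)$ operation; bounding the per-bag network cost uniformly by the full sweep cost $O(d^2 \Sigma)$, and noting that the fooling-rate evaluation of Eq.~\eqref{eq: goal} likewise requires one forward pass per bag and hence $O(d^2 \Sigma)$, each of the $N$ inner iterations is $O(d^2 \Sigma)$. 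Multiplying by the $N$ iterations and the constant $L_2$ gives $O(N d^2 \Sigma)$, as stated.

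The main obstacle I anticipate is making the architecture-dependent $O(d^2)$-per-instance estimate precise and honest: it rests on treating $d' = \Theta(d)$ and on the dense-layer multiplication being the bottleneck, which should be recorded as an explicit hypothesis rather than assumed silently. A secondary subtlety is the bookkeeping for MI-UAP, since summing the exact per-bag costs $O(d^2 n_i)$ would collapse the inner loop to $O(d^2 \Sigma)$; the factor $N$ in the stated bound arises from the looser convention of charging each of the $N$ inner iterations the full sweep cost $O(d^2 \Sigma)$, and I would make this accounting choice explicit so that the upper bound is clearly valid even though it is not tight.
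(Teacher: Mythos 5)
Your proposal is correct and follows essentially the same route as the paper's proof: both isolate the $O(d^2)$-per-instance forward/backward pass through $\mathcal{F}$ as the dominant cost, dismiss the $O(d)$ vector updates as subdominant, treat $L_1$ and $L_2$ as bounded, and sum over the $\Sigma=\sum_i n_i$ instances to obtain $O(d^2\Sigma)$ for MI-CAP and the extra factor of $N$ for MI-UAP from its per-bag loop. You are in fact somewhat more careful than the paper, which silently adopts the same loose convention of charging each inner iteration of MI-UAP a full $O(d^2\Sigma)$ sweep; your explicit remark that exact per-bag accounting would tighten the bound, and your flagging of the $d'=\Theta(d)$ dense-layer assumption, are points the paper glosses over.
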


\begin{proof}\let\qed\relax
For MI-CAP, the gradient calculation in its main loop, which is directly tied to the complexity of the network structure given $\mathcal{F}$, determines the majority of the time complexity of this algorithm.
Using DSMIL \cite{Li:2021:1431814328} as an example, which has a more complex network structure, it mostly consists of linear transformations and convolution operations.
DSMIL therefore costs $O(n_id^2L_1)$ to handle a bag.
Lines 14--16 are primarily composed of simple addition and subtraction operations, whereas Line 17 solely employs the pretrained $\mathcal{F}$ to predict the labels of perturbed bags $B_i \oplus \epsilon_i$.
When contrasting these costs with $n_i d^2$, they can be disregarded.
Generally, we have $L_1 \ll \Sigma$, where $\Sigma = \sum_i n_i$.
Therefore, the time complexity of MI-CAP applied to the whole data set is $O(d^2L_1\sum_i n_i) = O(d^2\Sigma)$.

For MI-UAP, its main loop is itself an operation on all bags, which costs $O(d^2L_2N\Sigma)$ and can be simplified to $O(N d^2\Sigma)$.
Despite taking longer than MI-CAP to generate the perturbation, MI-UAP only costs $O(1)$ during inference to a given bag rather than $O(n_id^2L_1)$.
\end{proof}

The perturbation generation of MI-CAP and MI-UAP is carried out on the premise that the instance in the bag is a vector, which does not satisfy the more complex classification of the bag structure mentioned in \cite{Ilse:2018:21272136,Konstantinov:2022:123}.
It is therefore necessary to extend these two algorithms to enhance their adaptability in the above scenarios.

Let $\mathcal{B}_i = \{ I_{i1}, I_{i2}, \dots, I_{in_i} \}$ be a bag, where $I_{ij}$ is the $j$-th image of $\mathcal{B}_i$.
The objective in Eq. \eqref{eq: goal} will be rewritten as
\begin{equation}
\max_{\mathcal{B}_i \sim \mu} \frac{\sum_i \mathbb{I} (\mathcal{F}' (\mathcal{B}_i \oplus \epsilon') \neq \hat{y}_i)}{\sum_i \mathbb{I} (y_i == \hat{y}_i)},
\end{equation}
where $\mathcal{B}_i \oplus \epsilon' = \{ I_{i1} + \epsilon', I_{i2} + \epsilon', \dots, I_{in_i} + \epsilon' \}$ and $\epsilon'$ has the same amount of rows and columns as $I_{ij}$.
Here, $\mathcal{F}'$ denotes the addition of a convolutional block to the network in order to learn the vector representation of the image, and the specific structure is taken from \cite{Ilse:2018:21272136}.

\section{Experiments}

In this section, we will utilize MI-CAP and MI-UAP to expose the vulnerability of the SOTA MIL neural networks, which are conducted on three image databases, two VAD data sets, and five benchmark data sets.

\subsection{Experimental Setups}\label{sec: experimental_setups}

Five networks were selected to evaluate our methods, including ABMIL and GAMIL \cite{Ilse:2018:21272136}, LAMIL \cite{Shi:2020:57425745}, DSMIL \cite{Li:2021:1431814328}, and MAMIL \cite{Konstantinov:2022:123}.
With the exception of the training epoch and learning rate, we exactly follow the original paper's description for their parameter setups.
For consistency, these two parameters are set to be $50$ and $0.0001$ for all algorithms, respectively.
For MI-CAP and MI-UAP, we set $L_1=10$, $L_2=50$, $\delta=0.5$, and $\xi$ was enumerate in $\{0.01, 0.05, 0.1, 0.2, \dots, 0.5, 1\}$.
The modes they use are denoted by ``ave'' and ``att".
The reason we set $\delta$ this way is that, assuming the data set is class-balanced at this point, only half the fooling rate is required to fool the judgement of the MIL learner.
And the reason for $\xi$ is that we normalize all of the data sets, so the setting of it indirectly reflects how perturbation and data space are related.
And we would select an appropriate $\xi$ before comparing performance, as a too large $\xi$ will distort the bags too much and render the perturbation worthless.

Three famous image databases were used to evaluate the effectiveness and the generalizability of our methods, including MNIST \cite{Lecun:1998:mnist}, CIFAR10 \cite{Krizhevsky:2009:learning}, and STL10 \cite{Coates:2011:215223}.
These databases are not applicable to MIL scenarios and require some special strategies to convert into data sets with MIL data structures.
Thanks to the contribution of \cite{Ilse:2018:21272136} in this regard, the following are the steps to generate MIL data:
1) Specify the number of bags, the range of bag sizes, and the target class of the database (here we set the target class to $9$).
The image from the target class will be used as the positive instance, which determines the label of the bag;
2) Generate the size of each bag at random, then choose the required number of images from the database.
If an image from the target class is present, the label is positive; otherwise, it is negative;
and
3) Return the data set once sufficient bags are generated.

The data sets produced with the above three image databases closely mimic the MIL application scenarios.
However, these data sets were only generated based on the MIL assumption, and the actual application scenarios may be much more complex.
In this regard, we employ two VAD data sets to validate the proposed algorithms.
Specifically, ShanghaiTech \cite{Liu:2018:63566545,Zhong:2019:12371246} and UCF-Crime \cite{Sultani:2018:64796488} are a medium-scale data set from fixed-angle street video surveillance and a large-scale data set from real-world street and indoor surveillance cameras.
Because there are typically many frames in a single video and supervision information is only provided at the video-level, VAD is a typical weak supervision problem.
Therefore, we use the setting of \cite{Tian:2021:49754986}, split each video in the aforementioned data sets into multiple clips, treat each clip as a bag, and use I3D \cite{Kay:2017:122} for preprocessing to help the model learn video features more effectively.

Furthermore, MI-CAP and MI-UAP cannot be used to directly fool traditional MIL methods because most of them do not involve gradient computation but rather certain methods based on key samples, support vectors, or others.
So, we use the universal perturbation to attack traditional MIL methods, including BAMIC \cite{Zhang:2009:4768}, miFV and miVLAD \cite{Wei:2017:975987}, ELDB \cite{Yang:2021:112}, and AEMI \cite{Yang:2022:109121}.
The data sets used both are well-known benchmark drug activity prediction \cite{Dietterich:1997:3171} and image classification data sets \cite{Andrews:2002:561568,Li:2008:9851002}.
The parameter settings of these five methods are referenced to \cite{Yang:2022:109121}.

\begin{figure*}[!htb]
    \centering
    \subfigure[ABMIL]{\includegraphics[width=0.33\hsize]{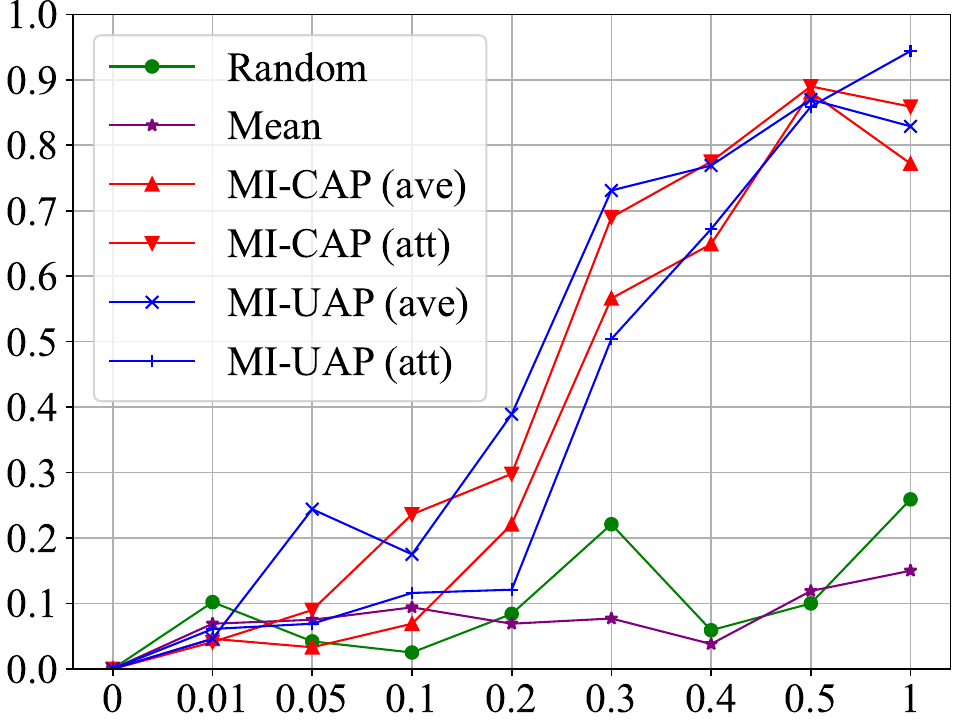}}
    \subfigure[GAMIL]{\includegraphics[width=0.33\hsize]{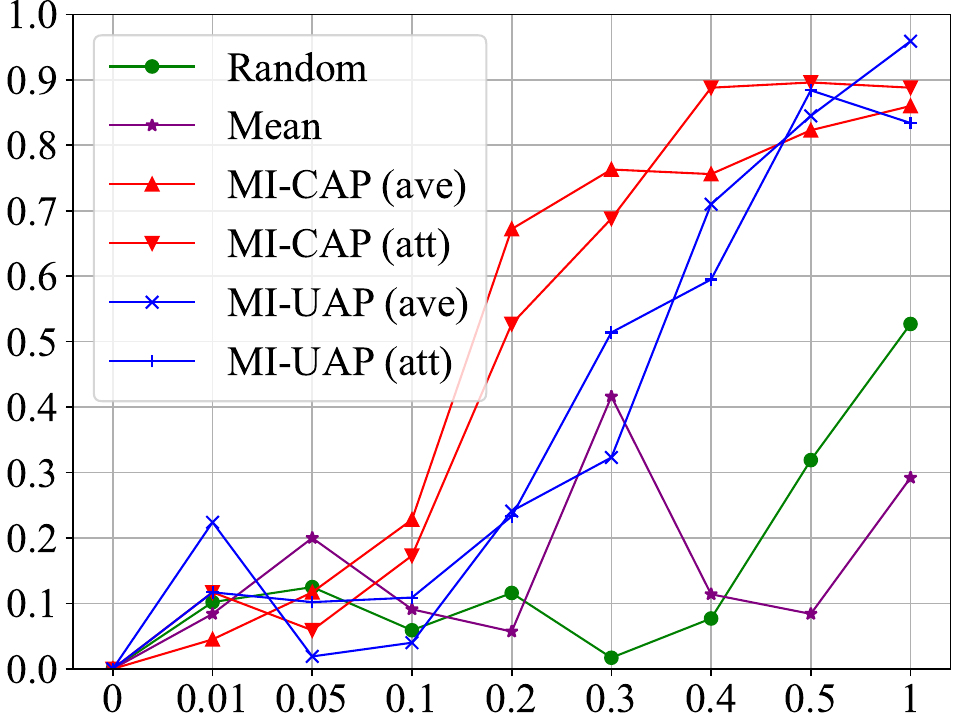}}
    \subfigure[LAMIL]{\includegraphics[width=0.33\hsize]{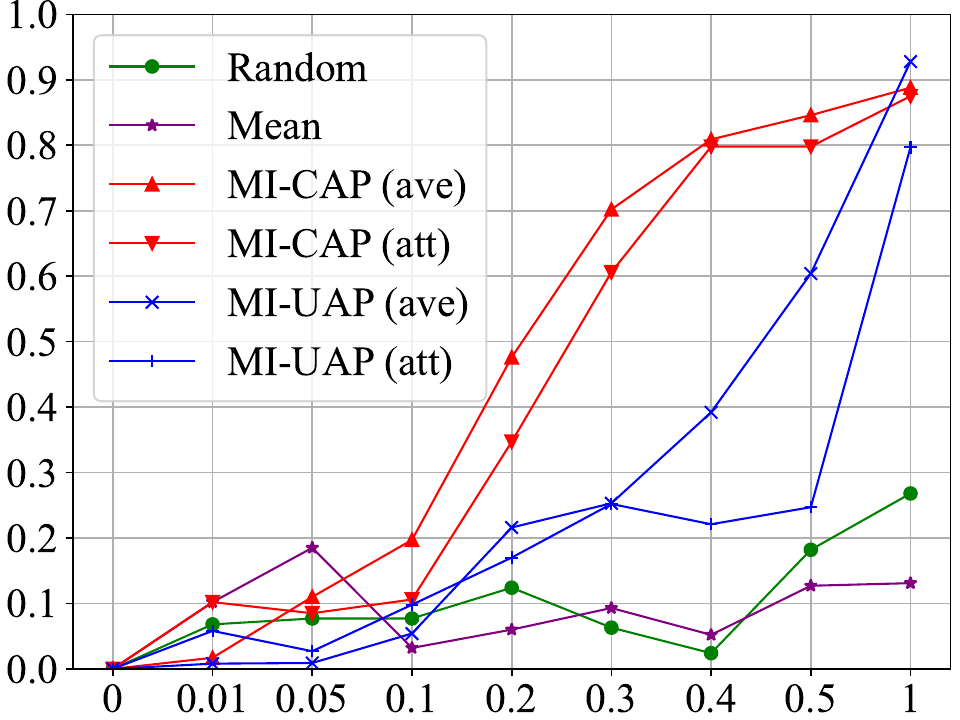}}

    \subfigure[DSMIL]{\includegraphics[width=0.33\hsize]{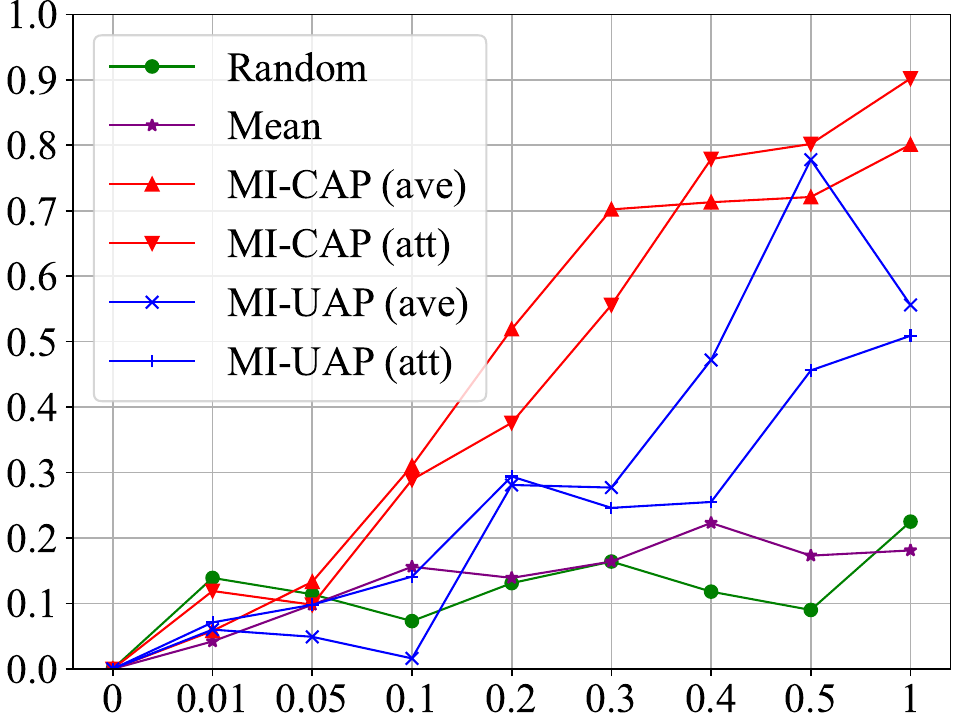}}
    \subfigure[MAMIL]{\includegraphics[width=0.33\hsize]{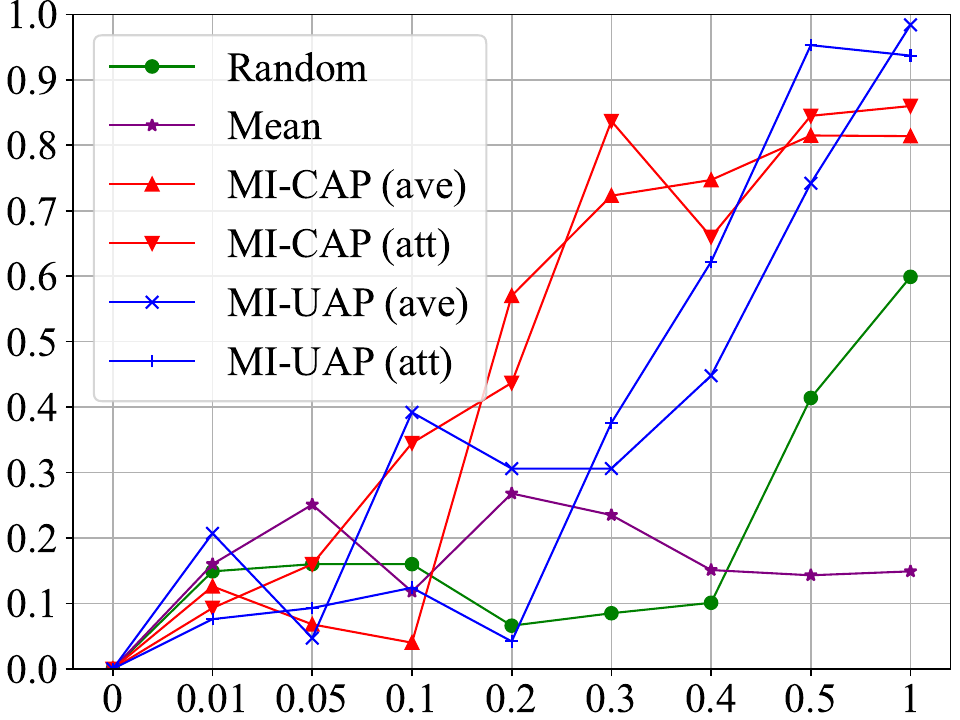}}

    \caption{
    Parameter analysis of the projected magnitude control parameter $\xi$ under the recall metric on the MNIST data set.
    $\xi=0$ means no perturbation is added.}
    \label{fig: xi_ratio}
\end{figure*}

\begin{figure*}[!htb]
    \centering
    \subfigure[MI-UAP (ave)]{\includegraphics[width=0.48\hsize]{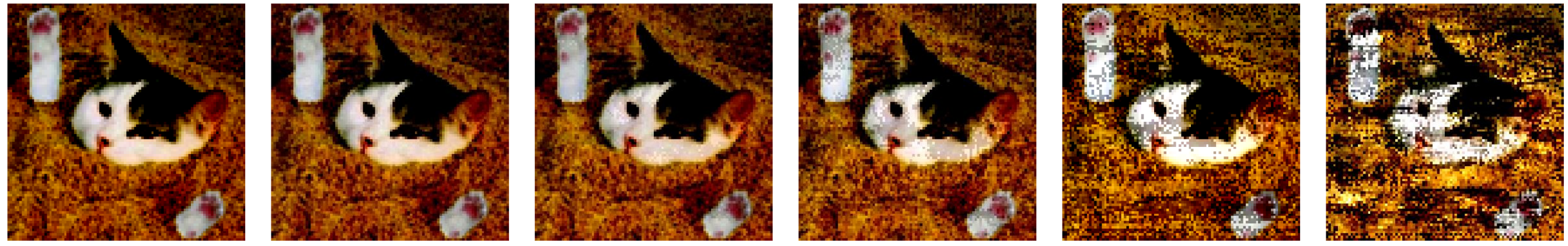}}
    \hspace{0.03\hsize}
    \subfigure[MI-UAP (att)]{\includegraphics[width=0.48\hsize]{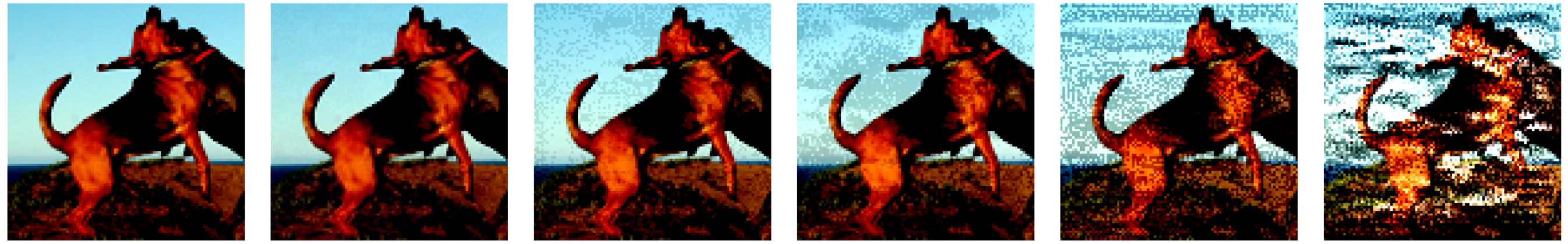}}

    \caption{
    Images with varying magnitudes of perturbations generated by ABMIL on the STL10 data set, where perturbation increases from left to right, corresponding to $0, 0.01, 0.05, 0.1, 0.2$, and $0.3$, respectively.
    }
    \label{fig: xi_increase}
\end{figure*}

\begin{table*}[!htb]
\caption{Performance comparison for network-based classification on three databases using seven perturbation strategies ($\xi = 0.2$).}
\label{table: performance_nn}
\centering
\resizebox{\textwidth}{!}{
\begin{tabular}{|c|cc||ccccc|}
\toprule
Data set                      & Evaluator             & Perturbation& ABMIL \cite{Ilse:2018:21272136}               & GAMIL \cite{Ilse:2018:21272136}
                                                                    & LAMIL \cite{Shi:2020:57425745}                & DSMIL \cite{Li:2021:1431814328}               & MAMIL \cite{Konstantinov:2022:123}\\
\midrule
\multirow{10}{*}{MNIST}       & \multirow{7}{*}{Acc}  & No          & $0.952\pm0.011$                               & $0.972\pm0.018$
                                                                    & $0.960\pm0.014$                               & $0.956\pm0.017$                               & $0.968\pm0.011$\\
                              &                       & Random      & $0.044\pm0.041$                               & $0.088\pm0.033$
                                                                    & $0.104\pm0.009$                               & $0.076\pm0.040$                               & $0.048\pm0.024$\\
                              &                       & Mean        & $0.020\pm0.030$                               & $0.056\pm0.036$
                                                                    & $0.040\pm0.024$                               & $0.076\pm0.014$                               & $0.140\pm0.039$\\
                              &                       & MI-CAP (ave)& $0.224\pm0.065$                               & $\ \ \ 0.496\pm0.086\downarrow$
                                                                    & $\ \ \ 0.364\pm0.071\downarrow$               & $\ \ \ 0.492\pm0.075\downarrow$               & \ \ \ $0.312\pm0.130\downarrow$\\
                              &                       & MI-CAP (att)& $0.208\pm0.112$                               & $0.364\pm0.104$
                                                                    & $0.288\pm0.090$                               & $0.288\pm0.033$                               & $0.276\pm0.090$\\
                              &                       & MI-UAP (ave)& $\ \ \ 0.320\pm0.039\downarrow$               & $0.124\pm0.091$
                                                                    & $0.112\pm0.061$                               & $0.020\pm0.026$                               & $0.156\pm0.204$\\
                              &                       & MI-UAP (att)& $0.060\pm0.039$                               & $0.144\pm0.059$
                                                                    & $0.088\pm0.039$                               & $0.144\pm0.108$                               & $0.020\pm0.030$\\
                              \cdashline{2-8}[1pt/0pt]
                              & \multirow{7}{*}{Rec}  & No          & $0.944\pm0.019$                               & $0.942\pm0.037$
                                                                    & $0.935\pm0.023$                               & $0.943\pm0.021$                               & $0.946\pm0.021$\\
                              &                       & Random      & $0.076\pm0.070$                               & $0.116\pm0.064$
                                                                    & $0.099\pm0.006$                               & $0.118\pm0.090$                               & $0.044\pm0.036$\\
                              &                       & Mean        & $0.069\pm0.051$                               & $0.057\pm0.062$
                                                                    & $0.035\pm0.037$                               & $0.126\pm0.056$                               & $0.246\pm0.080$\\
                              &                       & MI-CAP (ave)& $0.221\pm0.074$                               & $\ \ \ 0.672\pm0.140\downarrow$
                                                                    & $\ \ \ 0.476\pm0.129\downarrow$               & $\ \ \ 0.519\pm0.088\downarrow$               & $\ \ \ 0.570\pm0.317\downarrow$\\
                              &                       & MI-CAP (att)& $0.298\pm0.169$                               & $0.527\pm0.195$
                                                                    & $0.322\pm0.108$                               & $0.363\pm0.102$                               & $0.415\pm0.120$\\
                              &                       & MI-UAP (ave)& $\ \ \ 0.389\pm0.105\downarrow$               & $0.241\pm0.196$
                                                                    & $0.209\pm0.219$                               & $\ \ \ 0.028\pm0.021\uparrow$                 & $0.306\pm0.389$\\
                              &                       & MI-UAP (att)& $0.121\pm0.064$                               & $0.233\pm0.107$
                                                                    & $0.216\pm0.116$                               & $0.281\pm0.221$                               & $0.020\pm0.035$\\
\midrule
\multirow{5}{*}{CIFAR10}      & \multirow{7}{*}{Acc}  & No          & $0.700\pm0.035$                               & $0.640\pm0.037$
                                                                    & $0.624\pm0.043$                               & $0.676\pm0.030$                               & $0.680\pm0.020$\\
                              &                       & Random      & $0.112\pm0.050$                               & $0.024\pm0.022$
                                                                    & $0.004\pm0.020$                               & $0.076\pm0.014$                               & $0.120\pm0.068$\\
                              &                       & Mean        & $0.196\pm0.077$                               & $0.084\pm0.055$
                                                                    & $0.060\pm0.059$                               & $0.126\pm0.023$                               & $0.084\pm0.052$\\
                              &                       & MI-CAP (ave)& $0.132\pm0.084$                               & $0.176\pm0.074$
                                                                    & $\ \ \ 0.268\pm0.027\downarrow$               & $0.128\pm0.093$                               & $0.108\pm0.063$\\
                              &                       & MI-CAP (att)& $0.196\pm0.079$                               & $\ \ \ 0.195\pm0.062\downarrow$
                                                                    & $0.140\pm0.078$                               & $0.144\pm0.039$                               & $0.112\pm0.106$\\
                              &                       & MI-UAP (ave)& $0.136\pm0.086$                               & $0.132\pm0.098$
                                                                    & $0.084\pm0.094$                               & $0.080\pm0.109$                               & $0.076\pm0.108$\\
                              &                       & MI-UAP (att)& $\ \ \ 0.244\pm0.119\downarrow$               & $0.164\pm0.071$
                                                                    & $0.092\pm0.050$                               & $\ \ \ 0.200\pm0.071\downarrow$               & $\ \ \ 0.212\pm0.130\downarrow$\\
\midrule
\multirow{10}{*}{STL10}       & \multirow{7}{*}{Acc}  & No          & $0.804\pm0.033$                               & $0.828\pm0.054$
                                                                    & $0.820\pm0.024$                               & $0.776\pm0.003$                               & $0.808\pm0.090$\\
                              &                       & Random      & $0.112\pm0.018$                               & $0.116\pm0.070$
                                                                    & $0.096\pm0.062$                               & $0.032\pm0.043$                               & $0.108\pm0.048$\\
                              &                       & Mean        & $0.052\pm0.076$                               & $0.196\pm0.122$
                                                                    & $0.068\pm0.038$                               & $0.124\pm0.050$                               & $0.086\pm0.058$\\
                              &                       & MI-CAP (ave)& $0.176\pm0.053$                               & $0.172\pm0.172$
                                                                    & $0.192\pm0.089$                               & $0.155\pm0.062$                               & $\ \ \ 0.228\pm0.081\downarrow$\\
                              &                       & MI-CAP (att)& $0.128\pm0.076$                               & $\ \ \ 0.240\pm0.038\downarrow$
                                                                    & $0.188\pm0.057$                               & $\ \ \ 0.252\pm0.099\downarrow$               & $0.140\pm0.073$\\
                              &                       & MI-UAP (ave)& $0.124\pm0.091$                               & $0.108\pm0.035$
                                                                    & $0.072\pm0.044$                               & $0.061\pm0.052$                               & $0.132\pm0.151$\\
                              &                       & MI-UAP (att)& $\ \ \ 0.212\pm0.104\downarrow$               & $0.176\pm0.142$
                                                                    & $\ \ \ 0.240\pm0.069\downarrow$               & $0.036\pm0.103$                               & $0.065\pm0.087$\\
                              \cdashline{2-8}[1pt/0pt]
                              & \multirow{7}{*}{Rec}  & No          & \multirow{7}{*}{$<0.400$}                     & $0.756\pm0.067$
                                                                    & $0.756\pm0.049$                               & $0.542\pm0.192$                               & \multirow{7}{*}{$<0.400$}\\
                              &                       & Random      &                                               & $0.251\pm0.081$
                                                                    & $0.122\pm0.135$                               & $\ \ \ 0.137\pm0.094\downarrow$               & $\ $\\
                              &                       & Mean        &                                               & $0.220\pm0.173$
                                                                    & $0.289\pm0.128$                               & $0.008\pm0.130$                               & $\ $\\
                              &                       & MI-CAP (ave)&                                               & $0.240\pm0.181$
                                                                    & $0.190\pm0.087$                               & $0.135\pm0.057$                               & $\ $\\
                              &                       & MI-CAP (att)&                                               & $0.195\pm0.057$
                                                                    & $\ \ \ 0.320\pm0.118\downarrow$               & $0.119\pm0.156$                               & $\ $\\
                              &                       & MI-UAP (ave)&                                               & $0.251\pm0.192$
                                                                    & $0.112\pm0.070$                               & $0.115\pm0.083$                               & $\ $\\
                              &                       & MI-UAP (att)&                                               & $\ \ \ 0.361\pm0.077\downarrow$
                                                                    & $0.304\pm0.105$                               & $0.029\pm0.085$                               & $\ $\\
\bottomrule
\end{tabular}}
\end{table*}

\begin{table*}[!htb]
\caption{Performance comparison for network-based classification on three databases using five perturbation strategies ($\xi=0.01$).}
\label{table: performance_vad}
\centering
\resizebox{\textwidth}{!}{
\begin{tabular}{|c|cc||ccccc|}
\toprule
Data set                       & Evaluator             & Perturbation& ABMIL \cite{Ilse:2018:21272136}               & GAMIL \cite{Ilse:2018:21272136}
                                                                     & LAMIL \cite{Shi:2020:57425745}                & DSMIL \cite{Li:2021:1431814328}               & MAMIL \cite{Konstantinov:2022:123}\\
\midrule
\multirow{10}{*}{ShanghaiTech} & \multirow{5}{*}{Acc}  & No          & $0.984\pm0.005$                               & $0.979\pm0.005$
                                                                     & $1.000\pm0.000$                               & $1.000\pm0.000$                               & $0.986\pm0.016$\\
                               &                       & MI-CAP (ave)& $0.602\pm0.026$                               & $\ \ \ 0.633\pm0.065\downarrow$
                                                                     & $\ \ \ 0.611\pm0.028\downarrow$               & $\ \ \ 0.435\pm0.016\downarrow$               & $0.349\pm0.019$\\
                               &                       & MI-CAP (att)& $\ \ \ 0.621\pm0.019\downarrow$               & $0.401\pm0.013$
                                                                     & $0.607\pm0.034$                               & $0.429\pm0.013$                               & $0.355\pm0.026$\\
                               &                       & MI-UAP (ave)& $0.402\pm0.012$                               & $0.398\pm0.019$
                                                                     & $0.449\pm0.022$                               & $0.424\pm0.017$                               & $0.378\pm0.058$\\
                               &                       & MI-UAP (att)& $0.401\pm0.013$                               & $0.439\pm0.057$
                                                                     & $0.430\pm0.025$                               & $0.414\pm0.018$                               & $\ \ \ 0.415\pm0.005\downarrow$\\
                               \cdashline{2-8}[1pt/0pt]
                               & \multirow{5}{*}{Rec}  & No          & $0.940\pm0.021$                               & $0.921\pm0.019$
                                                                     & $1.000\pm0.000$                               & $1.000\pm0.000$                               & $0.946\pm0.061$\\
                               &                       & MI-CAP (ave)& $0.307\pm0.022$                               & $0.279\pm0.031$
                                                                     & $0.320\pm0.019$                               & $\ \ \ 0.822\pm0.031\downarrow$               & $0.660\pm0.030$\\
                               &                       & MI-CAP (att)& $0.298\pm0.016$                               & $\ \ \ 0.752\pm0.022\downarrow$
                                                                     & $0.307\pm0.036$                               & $0.810\pm0.025$                               & $0.660\pm0.022$\\
                               &                       & MI-UAP (ave)& $0.746\pm0.017$                               & $0.721\pm0.047$
                                                                     & $0.787\pm0.027$                               & $0.800\pm0.033$                               & $\ \ \ 0.800\pm0.094\downarrow$\\
                               &                       & MI-UAP (att)& $\ \ \ 0.651\pm0.022\downarrow$               & $0.674\pm0.113$
                                                                     & $\ \ \ 0.794\pm0.037\downarrow$               & $0.754\pm0.023$                               & $0.790\pm0.027$\\
\midrule
\multirow{10}{*}{UCF-Crime}    & \multirow{5}{*}{Acc}  & No          & $0.991\pm0.002$                               & $0.983\pm0.007$
                                                                     & $0.997\pm0.002$                               & $0.999\pm0.001$                               & $0.991\pm0.003$\\
                               &                       & MI-CAP (ave)& $0.498\pm0.015$                               & $\ \ \ 0.504\pm0.005\downarrow$
                                                                     & $0.510\pm0.012$                               & $\ \ \ 0.695\pm0.277\downarrow$               & $0.488\pm0.024$\\
                               &                       & MI-CAP (att)& $0.489\pm0.023$                               & $0.470\pm0.026$
                                                                     & $0.498\pm0.008$                               & $0.685\pm0.234$                               & $0.485\pm0.012$\\
                               &                       & MI-UAP (ave)& $\ \ \ 0.507\pm0.014\downarrow$               & $0.495\pm0.019$
                                                                     & $0.507\pm0.007$                               & $0.527\pm0.017$                               & $0.498\pm0.005$\\
                               &                       & MI-UAP (att)& $0.501\pm0.007$                               & $0.488\pm0.003$
                                                                     & $\ \ \ 0.515\pm0.009\downarrow$               & $0.526\pm0.022$                               & $\ \ \ 0.504\pm0.007\downarrow$\\
                               \cdashline{2-8}[1pt/0pt]
                               & \multirow{5}{*}{Rec}  & No          & $0.990\pm0.002$                               & $0.977\pm0.012$
                                                                     & $0.995\pm0.002$                               & $0.999\pm0.002$                               & $0.990\pm0.003$\\
                               &                       & MI-CAP (ave)& $0.483\pm0.015$                               & $0.459\pm0.006$
                                                                     & $0.485\pm0.009$                               & $0.303\pm0.279$                               & $0.485\pm0.025$\\
                               &                       & MI-CAP (att)& $0.490\pm0.021$                               & $0.484\pm0.024$
                                                                     & $0.494\pm0.005$                               & $0.297\pm0.024$                               & $0.496\pm0.012$\\
                               &                       & MI-UAP (ave)& $\ \ \ 0.520\pm0.016\downarrow$               & $0.533\pm0.049$
                                                                     & $0.530\pm0.034$                               & $0.550\pm0.014$                               & $0.516\pm0.007$\\
                               &                       & MI-UAP (att)& $0.513\pm0.007$                               & $\ \ \ 0.540\pm0.054\downarrow$
                                                                     & $\ \ \ 0.549\pm0.054\downarrow$               & $\ \ \ 0.570\pm0.044\downarrow$               & $\ \ \ 0.517\pm0.009\downarrow$\\
\bottomrule
\end{tabular}}
\end{table*}

\subsection{Parameter Analysis}

Figures \ref{fig: xi_ratio} shows the parameter analysis experiments for the crucial parameter $\xi$ in MI-CAP and MI-UAP.
The abscissa represents $\xi$, and the ordinate represents the recall value of the classification.
The results show that the MIL networks may be fooled more readily as perturbation strength increases and can even drop recall near zero when perturbation reaches a certain threshold.
This implies that perturbations with high fooling rates are entirely possible, right?
Evidently not, and Figure \ref{fig: xi_increase} clearly refutes the claim that ``perturbations with large fooling rates are also more effective."
It can be seen from this figure that the image will be distorted when the perturbation exceeds a certain magnitude, which is obviously unreasonable.
Therefore, we recommend that the value of $\xi$ for image classification not be more than $0.2$.
Figure \ref{fig: xi_ratio} further demonstrates that MI-CAP (ave), which is followed by MI-CAP (att), has the best overall performance, while MI-UAP (ave) only gains an edge over others on ABMIL.
Meanwhile, a very interesting case is that MI-CAP (ave) gets similar results on GAMIL for $\xi=0.01$ and $\xi=0.2$, while the former has a significantly smaller perturbation magnitude than the latter.
This partially reflects its potential, which we will focus on expanding in the future.
Some other findings are as follows:
a) The perturbation generated by MI-CAP and MI-UAP is not random or irregular, and it can show a better effect with an increase in the perturbation magnitude without taking the distortion of the bags into account;
Perturbation generation mode ave outperforms mode att overall, probably because its generated perturbations are added to each instance in the bag, while mode att only focuses on the gradient of the most important instances;
c) The original bag is only slightly perturbed when $\xi \leq 0.05$;
and
d) Perturbations affect both target and irrelevant background areas, illustrating the need to limit the regions in which they work, something that current algorithms are unable to do.

\subsection{Effectiveness Comparison}

Table \ref{table: performance_nn} displays the findings of the effectiveness comparison of our methods against the ``random'' and ``mean'' approaches introduced in \cite{Moosavi:2017:17651773}.
``No'' specifically refers to no perturbation added, which corresponds to the original classification performance of the MIL neural network.
For each data set under the metric ``No'', the average accuracy (acc) and recall (rec) of five individual experiments under $\xi=0.2$ were given, together with their standard deviation (the value with ``$\pm$''), e.g.,
\begin{equation}
Acc = \frac{1}{N}\sum\limits_{i = 1}^N \mathbb{I} ({y}_i == \hat{y}_i),
\end{equation}
For others, we show a decrease in the algorithm accuracy after adding perturbation, e.g.,
\begin{equation}
Decrease = Acc - \frac{1}{N}\sum\limits_{i = 1}^N \mathbb{I} ({y}_i == \tilde{y}_i).
\end{equation}
For example, the second and third rows of the fourth column indicate that the average accuracy of ABMIL on the MNIST data set is $95.2\%$, which drops by $4.4\%$ after adding random perturbation.
One very important reason is that the accuracy after perturbation cannot fully reflect the performance of the perturbation algorithm because the results of multiple independent experiments of the method without perturbation are typically not equal.
The down arrow indicates the best perturbation algorithm for the current column, and the up arrow denotes that the perturbation improves model accuracy.
Note that the recall for more than three algorithms is below $0.4$ on CIFAR10 under ``No'', hence it is not displayed here.

The results show that MI-CAP and MI-UAP can fool the MIL neural network to a large extent, even bringing the recall on some data sets below $0.6$.
This is sufficient for MIL because adding perturbations to make the negative bag be predicted as positive is inherently ridiculous.
Another good finding is that the algorithm can be very unstable when predicting data with perturbations, which will further lower the confidence of the algorithm.
Meanwhile, the results in Table \ref{table: performance_nn} as well as Figures \ref{fig: xi_ratio} and \ref{fig: xi_increase} comprehensively reveal that our MI-CAP and MI-UAP are significantly different from random or mean perturbations, in terms of both performance and the regularity they exhibit.

VAD is one of the crucial applications that is directly tied to people's work and daily lives.
Criminals might inflict significant property damage if they adopt some techniques to interfere with the process, such as generating specialized perturbations to fool video surveillance.
To think more deeply about adversarial methods, we conducted experiments on such data sets, as shown in Table \ref{table: performance_vad}.
It should be observed that the five MIL neural networks perform poorly on CIFAR10 and STL10;
in particular, the recall value on CIFAR10 is very low, making it difficult to accurately depict MI-CAP and MI-UAP's performance.
Therefore, we show in this table the accuracy and standard deviation of the training data set.
Here, we also did not conduct parameter analysis tests because $\xi = 0.01$ produced good results.
The results reveal that VAD methods require more safety concerns since our techniques perform well on the VAD data sets.

\begin{figure*}[!htb]
    \centering
    \subfigure[Ave]{\includegraphics[width=0.44\hsize]{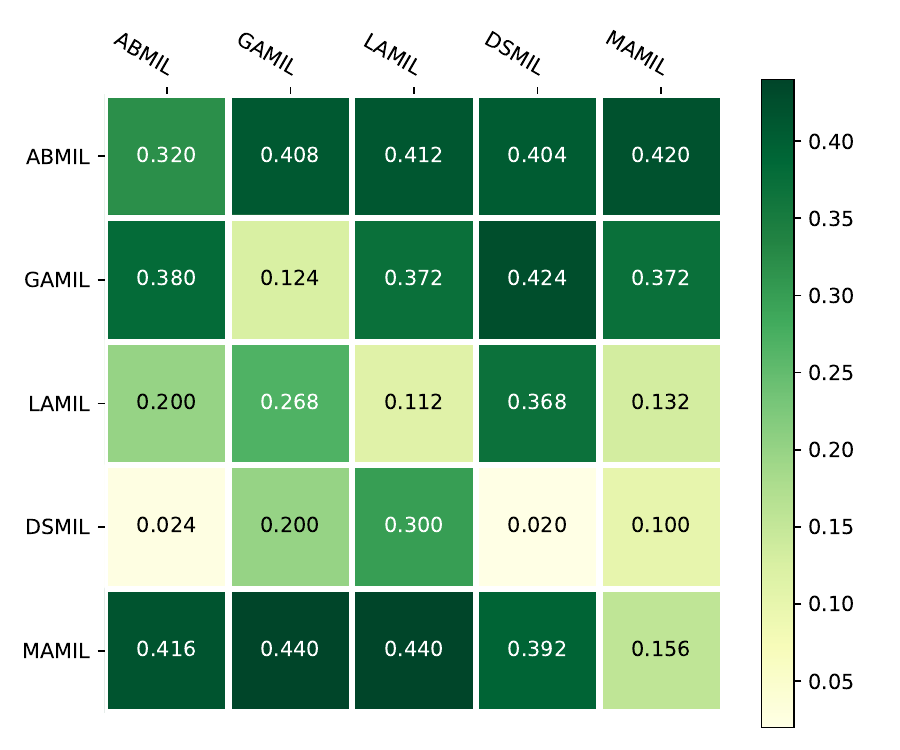}}
    \subfigure[Att]{\includegraphics[width=0.44\hsize]{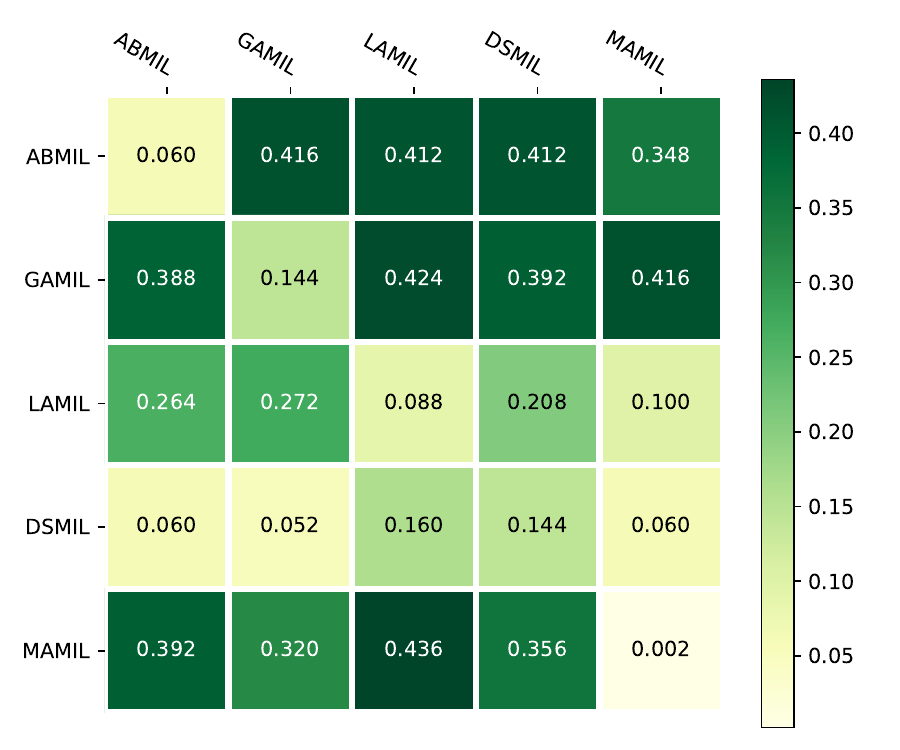}}

    \caption{
    The generalizability experiments of MI-UAP are achieved by applying perturbations generated on one network to other networks ($\xi=0.2$).
    }
    \label{fig: heat}
\end{figure*}

\begin{table*}[!htb]
\caption{Performance comparison for classification on five MIL benchmark data sets using MI-UAP (att) with ABMIL.}
\label{table: performance_traditional}
\centering
\resizebox{\textwidth}{!}{
\begin{tabular}{cccccccc}
\toprule
Data set                      & BAMIC \cite{Zhang:2009:4768}     & miFV \cite{Wei:2017:975987}      & miVLAD \cite{Wei:2017:975987}    & ELDB \cite{Yang:2021:112}        & AEMI \cite{Yang:2022:109121}\\
\midrule
Musk1                         & $0.885\pm0.034 \to 0.860\pm0.011$& $0.887\pm0.025 \to 0.484\pm0.009$& $0.829\pm0.050 \to 0.664\pm0.050$& $0.902\pm0.016 \to 0.853\pm0.012$& $0.867\pm0.019 \to 0.517\pm0.021$\\
Musk2                         & $0.868\pm0.022 \to 0.774\pm0.024$& $0.748\pm0.034 \to 0.622\pm0.007$& $0.788\pm0.041 \to 0.642\pm0.032$& $0.857\pm0.039 \to 0.798\pm0.034$& $0.804\pm0.010 \to 0.384\pm0.009$\\
Elephant                      & $0.844\pm0.013 \to 0.769\pm0.013$& $0.848\pm0.008 \to 0.500\pm0.000$& $0.859\pm0.031 \to 0.420\pm0.020$& $0.843\pm0.012 \to 0.757\pm0.015$& $0.875\pm0.010 \to 0.461\pm0.049$\\
Fox                           & $0.647\pm0.020 \to 0.658\pm0.004$& $0.639\pm0.010 \to 0.500\pm0.000$& $0.611\pm0.020 \to 0.427\pm0.027$& $0.648\pm0.014 \to 0.501\pm0.030$& $0.583\pm0.019 \to 0.423\pm0.022$\\
Tiger                         & $0.797\pm0.009 \to 0.692\pm0.008$& $0.646\pm0.032 \to 0.491\pm0.021$& $0.701\pm0.044 \to 0.427\pm0.037$& $0.767\pm0.013 \to 0.683\pm0.012$& $0.795\pm0.010 \to 0.477\pm0.045$\\
\bottomrule
\end{tabular}}
\end{table*}

\subsection{Generalizability Comparison}

We demonstrate the generalizability of the algorithm from two aspects, including applying perturbations generated on one network to other networks and to traditional MIL methods.

Figure \ref{fig: heat} shows an example of the generalizability of MI-UAP on the MNIST data set.
Each row depicts the result of attacking other algorithms with the perturbation generated by the current algorithm.
The diagonal lines specifically refer to the classification results in Table \ref{table: performance_vad}.
The hue of the square box gets darker as the value increases, signifying a stronger generalizability of the perturbation.
The results demonstrate that perturbations to one algorithm can effectively attack others, possibly with higher fooling rates.
This also illustrates the need for developing MI-UAP from another perspective, as it can generate only one perturbation for one data set, making it easy to store and use right away.

Specifically, we use the perturbations generated by MIL neural networks on the benchmark data sets to fool the traditional MIL methods, as shown in Table \ref{table: performance_traditional}.
The major goal of this experiment is to confirm that using MI-UAP to fool the traditional MIL ones is feasible because adding perturbations to drug molecules is unpractical.
The results show that our method works in most cases, with limited underperformance on BAMIC and ELDB on some data sets.
It seems like a coincidence that both algorithms are bag-based embedding methods.
Therefore, we speculate that the possible reason is that the added perturbation occasionally does not alter the metric relationship based on bag-level distance.

\subsection{Perturbation Defence}

\begin{figure*}[!htb]
    \centering
    \subfigure[Acc]{\includegraphics[width=0.44\hsize]{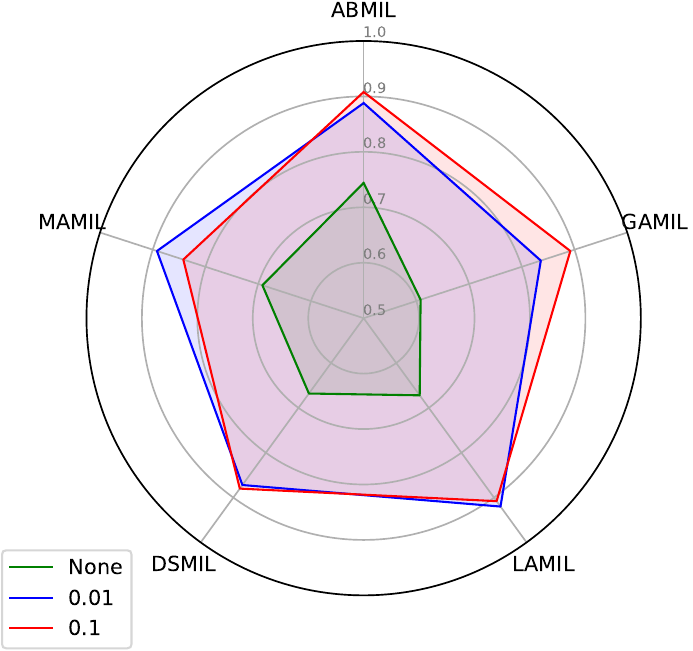}}
    \subfigure[Rec]{\includegraphics[width=0.44\hsize]{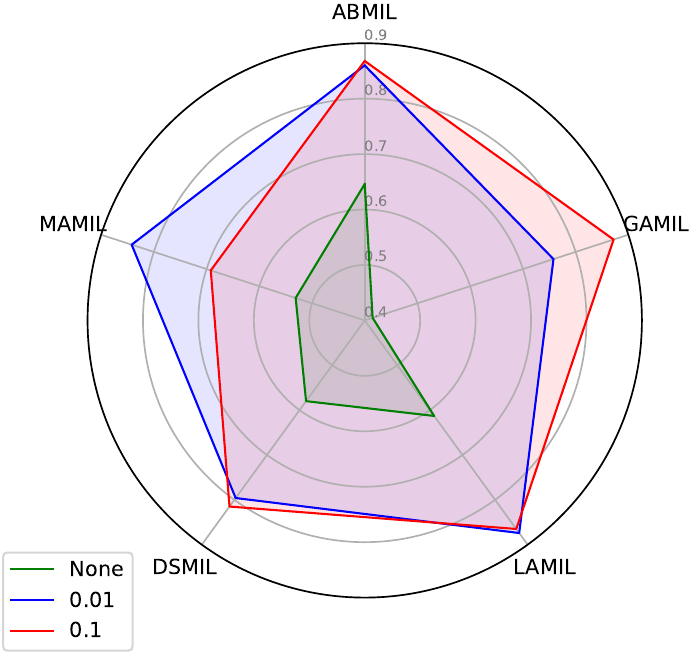}}

    \caption{
    Radar map of the classification results after adding some adversarial examples to the MNIST data set ($\xi = 0.2$).
    The numbers $0.01$ and $0.1$ indicate the percentage of adversarial examples in the training set, while None represents the situation when no adversarial examples are introduced.
    }
    \label{fig: redar}
\end{figure*}

Now that we have designed two methods to expose the vulnerability of the MIL embedding methods, some algorithm protection tactics should be taken into account to ensure the practical use of the MIL methods.
A simple and efficient strategy is to add a few adversarial examples to the training set, as shown in Figure \ref{fig: redar}, which shows the classification results of adversarial examples generated based on MI-CAP (att).
The reason for choosing MI-CAP is that it is more aggressive and has a good ability to fool the classifier.
The findings demonstrate that, although this is not as good as the performance without perturbation, only a small number of adversarial examples are needed to considerably increase the robustness of the classifier.
Why not keep the percentage of adversarial examples rising?
Because we believe that rather than doing something as straightforward as adding a lot of adversarial examples, our future focus should be on enhancing the safety of the algorithm itself.

\section{Conclusions}

In this paper, we have proposed two perturbation generation algorithms, MI-CAP and MI-UAP, to somewhat fool the MIL learner and interpret its vulnerability.
Extensive experiments are conducted to demonstrate the effectiveness and generalizability of the proposed algorithms.
A simple strategy is also proposed to deal with these adversarial perturbations.
This helps us to undersatnd more clearly about the factors that influence the security of an algorithm, and actually provides tactics for enhancing it.
In future, we plan to focus more on developing algorithms that are more secure.
Specifically, in view of developing a more secure MIL algorithm, the following points can be considered.

\begin{enumerate}
  \item
  In the generated perturbations, more attention should be paid to the valuable parts of the bag, such as instances that can trigger the bag label and essential areas within instances.
  \item
  Consider important details or unique circumstances in the actual application scenario to generate more targeted perturbation.
  \item
  Attempts to identify perturbations in input data to rule out or lessen the impact of adversarial perturbations.
\end{enumerate}

\section*{Declaration of Competing Interest}

No conflict of interest exits in the submission of this manus cript, and manuscript is approved by both authors for publication. We declare that the work described was original research that has not been published previously, and not under consideration for publication elsewhere, in whole or in part.

\section*{Acknowledgments}

This work was supported in part by the National Natural Science Foundation of China (62131016) and Open Project of Zhejiang Key Laboratory of Marine Big Data Mining and Application (OBDMA202102).

\end{document}